\renewcommand{\cite}{\citep}
\newtheorem{theorem}{Theorem}[section]
\newtheorem{proposition}{Proposition}[section]
\newtheorem{corollary}{Corollary}[proposition]
\theoremstyle{definition}
\newtheorem{Definition}{Definition}[section]
\def\BibTeX{{\rm B\kern-.05em{\sc i\kern-.025em b}\kern-.08em
    T\kern-.1667em\lower.7ex\hbox{E}\kern-.125emX}}
\setlist{itemsep=0pt}
\definecolor{mygreen}{rgb}{0.17,0.63,0.17}
\definecolor{mydarkred}{rgb}{0.6,0,0}
\definecolor{mydarkgreen}{rgb}{0,0.6,0}
\begin{document}
\sloppy

\title{Predictive Optimization \\with Zero-Shot Domain Adaptation}
\author[1,2]{Tomoya Sakai}
\author[1]{Naoto Ohsaka}
\affil[1]{NEC Corporation}
\affil[2]{RIKEN}
\affil[ ]{\footnotesize \texttt{ \{tomoya\_sakai, ohsaka\}{\fontfamily{ptm}\selectfont @}nec.com }} 
\date{}

\maketitle

\begin{abstract}
Prediction in a new domain without any training sample, called \emph{zero-shot domain adaptation} (ZSDA), is an important task in domain adaptation.
While prediction in a new domain has gained much attention in recent years, in this paper, we investigate another potential of ZSDA.
Specifically, instead of predicting responses in a new domain, we find a description of a new domain given a prediction.
The task is regarded as \emph{predictive optimization}, but existing predictive optimization methods have not been extended to handling multiple domains.
We propose a simple framework for predictive optimization with ZSDA and analyze the condition in which the optimization problem becomes convex optimization.
We also discuss how to handle the interaction of characteristics of a domain in predictive optimization.
Through numerical experiments, we demonstrate the potential usefulness of our proposed framework.
\end{abstract}

\providecommand{\keywords}[1]
{
  \small	
  \textbf{\textit{Keywords---}} #1
}

\keywords{Predictive optimization, Zero-shot domain adaptation, Convex optimization}

\section{Introduction}
Prediction in a new domain without any training samples, called \emph{zero-shot domain adaptation} (ZSDA) \cite{Diff-CV:Yang+Hospedales:2015,ICLR:Yang+Hospedales:2015}, is an important task in domain adaptation.
To this end, an approach to utilize domain descriptions \cite{Diff-CV:Yang+Hospedales:2015,ICLR:Yang+Hospedales:2015}, called \emph{domain attributes}, has been developed.
A goal of ZSDA is to obtain predictions in an \emph{unseen} domain in which we did not observe any training samples.
An application of ZSDA is the sales prediction of new products; regarding domains as products and given product attributes and sales data, we can use ZSDA to the sales prediction of a customer for a new product.
Thanks to ZSDA, we can predict the response of input in an unseen domain; however, one potential aspect of ZSDA has been overlooked.

We demonstrate another potential of ZSDA; by reversing the ZSDA prediction process, we can optimize domain attributes so that an evaluation metric of responses over customers is maximized, referred to as \emph{attribute optimization} as shown in Figure~\ref{fig:illust_po_with_zsda}.
That is, instead of predicting responses given new domain attributes as in ZSDA, our task is to find new domain attributes given a prediction.
In our example of new product prediction, we optimize an average of new product sales with respect to product attributes over a pre-specified customer group.
The obtained product attributes would be useful for designing a new product.

Our attribute optimization task can be regarded as \emph{predictive optimization} \cite{KDD:Ito+Fujimaki:2017,NeurIPS:Donti+etal:2017,ICML:Ito+etal:2018}, in which the goal is to optimize predicted outputs in terms of input variables.
There are various applications of predictive optimization: water distribution management \cite{ICDMWS:Vzliobite+Pechenizkiy:2010}, retail price optimization \cite{NeurIPS:Ito+Fujimaki:2016,KDD:Ito+Fujimaki:2017}, and grid scheduling \cite{NeurIPS:Donti+etal:2017}.
However, existing studies of predictive optimization mainly focus on a single domain, and the case of multiple domains has yet to be considered.
While we can use existing methods for each domain independently, it would not exploit the structures and similarities across multiple domains.
Moreover, it is not straightforward to optimize domain attributes for finding, e.g., a promising product in existing methods.

\begin{figure}[t]
    \centering
    \includegraphics[clip,width=.55\columnwidth]{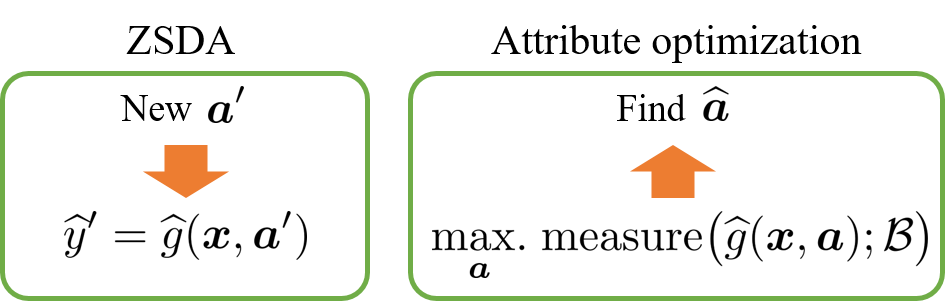}
    \caption{Illustration of attribute optimization.
    See Section~\ref{sec:prob_setting} for notations.
    In ZSDA, the purpose is to obtain a prediction in a new unseen domain $\ba'$.
    In contrast, our goal of attribute optimization is to find $\ba$ such that a certain ``measure'' over $\cB$ is maximized.
    }\label{fig:illust_po_with_zsda}
\end{figure}

In this paper, we propose a novel simple framework for attribute optimization.
Given domain attributes, inputs, and responses, our framework first trains a prediction function and then optimizes a measure computed by predicted outputs to find a new domain attribute.
Our method can handle continuous and discrete domain attributes and concave measure functions as objective functions.
Moreover, it can deal with cases when two or more domain-attribute variables are dependent, i.e., the interaction of domain attributes.
By regarding domains as \emph{objects}, domain attributes as \emph{designs}, and objective functions as \emph{sales, quality, or durability}, we use attribute optimization to discover the design of new best-selling, high-quality, and durable objects.
Our attribute optimization framework can be applied to the following concrete tasks:
\begin{itemize}[leftmargin=*]
    \item \textbf{New product design}:
    Consider that we are given sales, consumer features, and product characteristics data to design a new product by combining best-selling products' ingredients.
    Our framework easily satisfies this demand by regarding domains as products, domain attributes as product characteristics, features as consumer features, and the objective function as sales.
    
    \item \textbf{New tourist spot design}:
    Suppose that we are given reputations of tourist spots, tourist features, and characteristics of tourist spots data to design a new tourist spot on the basis of characteristics of tourist spots.
    Our framework easily fits this task by regarding a domain as a tourist spot, domain attributes as characteristics of a tourist spot, features as tourist features, and the objective function is the average reputation.
\end{itemize}

Our technical contributions for achieving the above-mentioned attribute optimization framework are as follows:
\begin{itemize}[leftmargin=*]
    \item We propose a simple framework for \emph{attribute optimization}.
    The main components of our framework are training predictors (ZSDA) and subsequent mathematical optimization (Sections~\ref{sec:prediction} and \ref{sec:optimization}).
    \item We devise conditions on objective functions and constraints whose corresponding optimization problem can be cast to \emph{convex optimization}, which is quickly solvable by off-the-shelf solvers (Section~\ref{sec:impl}).
    \item We provide practical examples of objective functions and constraints that meet the devised conditions (Section~\ref{sec:examples}).
    \item We describe how to handle interactions between domain-attribute variables.
    We show that the second-order interaction of $0$-$1$ domain attributes can be relaxed to \emph{semidefinite programming} (Section~\ref{sec:sdp}).
    \item We establish theoretical analyses of the proposed framework, which bounds the generalization error of the prediction method and approximation factors of the optimization method (Section~\ref{sec:theory}).
    \item We demonstrate the potential usefulness of our proposed method on synthetic and benchmark datasets (Section~\ref{sec:experiments}).
\end{itemize}

\section{Related work}

\subsection{Predictive optimization}
Predictive optimization has been applied to several real-world applications such as water distribution management \citep{ICDMWS:Vzliobite+Pechenizkiy:2010}, retail price optimization \cite{NeurIPS:Ito+Fujimaki:2016,KDD:Ito+Fujimaki:2017}, grid scheduling \cite{NeurIPS:Donti+etal:2017}, and inventory optimization \citep{SDM:Ohsaka+etal:2020}.
In existing work, a set of input-output samples is collected from a target domain; we train a prediction function and then optimize \emph{features of input} to maximize a certain measure of output in the target domain.
For example, in price optimization, we have sales for each item at a certain price. 
We first train a model to output the sales of an item from a price.
We then optimize the prices of items such that the total sales is maximized.
In existing work, the way of using item information, i.e., domain attributes, is not considered.
In addition, prediction and optimization including features of input and domain attributes are not trivial.

In contrast, our approach optimizes \emph{domain attributes} to maximize a certain measure of output for a set of fixed input samples.
In other words, we can find a prospective target domain for specific input samples through the optimized domain attributes.
Moreover, existing studies often consider a single domain while our study considers multiple domains.
To the best of our knowledge, this is the first study of predictive optimization for multiple domains with its attribute information.

\subsection{Data-driven design}
An application of our method is \emph{data-driven design} for new products.
For this purpose, several methods based on machine learning have been proposed recently \citep{KDD:Koutra+etal:2017,ICDM:Kang+etal:2017,RecSys:Vo+etal:2018}.
Among them, \citet{KDD:Koutra+etal:2017} is related to our problem setting, which considers multiple domains.
They also considered optimization of a domain for fixed target users and applied their method to movie design for target users by regarding a domain as a movie.
The method learns user-preferences through a tripartite graph of users, movies, and movie attributes.
The movie attributes are optimized by a greedy approach, which is optimal under specific assumptions~\citep{KDD:Koutra+etal:2017}.

Compared with the work of \citep{KDD:Koutra+etal:2017}, our method is not limited to data having graph structures and enables us to use various prediction models and optimization algorithms, as shown in Section~\ref{sec:proposed}.
In addition, we demonstrate the effectiveness of our method on various real-world datasets in Section~\ref{sec:experiments}.

\section{Background}
\label{sec:background}

\subsection{Problem settings}
\label{sec:prob_setting}

Let $\bx_{}^{(t)}\in\Re_{}^{d}$ be a feature vector and $y_{}^{(t)}\in\Re$ be a response in a domain $t\in\{1,\ldots,T\}$, where $d$ is a positive integer and $T$ denotes the number of domains.
We have a set of observations $\cD_{}^{(t)}:=\{(\bx_{i}^{(t)},y_{i}^{(t)})\}_{i=1}^{n_{}^{(t)}}$ for each $t$, where $n_{}^{(t)}$ is the number of samples in $t$.
For each domain, we assume that a description of a domain is available and they can be expressed as $m$ \emph{domain-attribute} variables.
We denote a domain-attribute vector for $t$ by $\ba_{}^{(t)}\in\Re^m$ and a set of domain-attribute vectors by $\cA:=\{\ba_{}^{(t)}\}_{t=1}^{T}$.
We define a dataset as $\cD:=\{ (\cD_{}^{(t)}, \ba_{}^{(t)}) \}_{t=1}^{T}$.

Let $g\colon\Re^d\times\Re^m\to\Re$ be a prediction function and $\cB$ be a set of target feature vectors.
Let $f$ be an objective function that returns a gain of a domain described by $\ba$.
That is, we measure the value in a domain through $f(\ba)$.
An example of $f$ is a mean response: $f(\ba)=(1/|\cB|)\sum_{\bx\in\cB}^{}g(\bx,\ba)$, where $|\cdot|$ denotes the size of a set.

The goal of the \emph{attribute optimization} problem is to find a new domain-attribute vector optimized for a gain function.

\subsection{Feature-unaware approach (FUA)}
An approach to optimize domain attributes is that we first learn a function from $\ba^{(t)}$ to, e.g., the mean of $\{ y_i^{(t)} \}_{i=1}$, and then find $\ba$ which maximizes the learned function.
We call the above approach the \emph{feature-unaware approach} (FUA).
Specifically, let $\overline{y}^{(t)}$ be the average of responses of domain $t$.
As a function, we use a linear model defined as $s(\ba)=\bw^\top\ba$, where $\bw\in\Re^m$ is the parameter vector.
We then train $s$ with $\{ (\ba^{(t)}, \overline{y}^{(t)} \}_{t=1}^T$ by, say, regularized least squares, i.e., ridge regression.
Let $\bwh$ be the estimated parameter obtained after training.
To optimize domain attributes, an approach is to select a domain-attribute variable whose corresponding weight of $\bwh$ to satisfy the user-defined and system-derived constraints.
Another approach is to formulate an optimization problem and solve it.

The FUA is simple, but it ignores dependence on features $\bx$.
We thus cannot optimize domain-attributes for each $\bx$ or a group of $\bx$, and cannot use the measure taking a distribution of features into account, which will be introduced in Section~\ref{sec:gain_func}.
Moreover, since the number of training samples is $T$, the use of complex models, such as neural networks, for $s$ leads to ill-posed problems, resulting in that an inaccurate estimation of response.
In contrast, our proposed approach takes features of input into account.
Moreover, we can use complex models for estimating response since the number of training samples is much larger than the FUA. 

\section{Proposed method}
\label{sec:proposed}
Our method consists of two steps: i) a prediction step that estimates a prediction function from $\cD$, and ii) an optimization step that solves an optimization problem under a user-preferred gain function.

\subsection{Prediction step}
\label{sec:prediction}
In the prediction step, we train a parameterized prediction function, $g$, with a training dataset $\cD$.
A simple example of $g$ is $g(\bx,\ba)=\bx^\top\bTheta\ba$, where $\bTheta\in\Re_{}^{d\times m}$ are the parameters to be learned.
For a fixed domain-attribute vector $\ba_{}^{(t)}$, we can regard $g$ as a prediction function for a specific domain $t$, e.g., $g_{}^{(t)}\colon\Re_{}^{d}\to\Re$.

Let $\ell\colon\Re\times\Re\to\Re_{\geq0}^{}$ be a loss function.
By solving the following optimization problem, we obtain a learned prediction function as
\begin{align*}
\gh:=\argmin_{g}\;
    \frac{1}{T}\sum_{t=1}^{T}\frac{1}{n_{}^{(t)}}\sum_{i=1}^{n}
    \ell\big(g(\bx_{i}^{(t)},\ba_{}^{(t)}),y_{i}^{(t)}\big) 
    + \lambda W(g),
\end{align*}
where $W$ is the regularization functional and $\lambda\geq0$ is the regularization parameter.

With the learned prediction function, we can estimate a response of $\bx$ even in a domain never seen before.
Let $\ba_{}^{\prime}$ be a domain-attribute vector for an unseen domain.
We can obtain a prediction in a domain that did not appear in a training dataset as $\gh(\bx,\ba_{}^{\prime})$.

Our idea is to reverse the aforementioned prediction process for finding a new domain-attribute vector that is likely to get a high gain as shown in Figure~\ref{fig:illust_po_with_zsda}.
That is, instead of obtaining a prediction given $\ba$, we find $\ba$ such that a prediction-based gain function is maximized.

\subsection{Optimization step}
\label{sec:optimization}
After we obtained a learned prediction function $\gh$, we move on to the optimization step.
As $\cB$ is a set of target feature vectors, we can regard it as a set of target users.
On the basis of $\gh$, we compute an estimate of an objective function $\fh$.

For a user-preferred gain function, we can find a promising domain-attribute vector by solving the following optimization problem:
\begin{align}
\begin{aligned}
&\maximize_{\ba\in\Re_{}^{m}} && \fh(\ba)  \\
& \subjectto && b_{j}^{}(\ba)\leq 0,\; j=1,\ldots,s \\
& && c_{j}^{}(\ba)=0,\; j=1,\ldots,t ,
\end{aligned}
\label{eq:opt_prob}
\end{align}
where $b_{j}^{}$ and $c_{j}^{}$ are an inequality and equality constraint function, respectively.
An example of a constraint is a budget constraint; if $\ba$ is the $0$-$1$ vector and $\bgamma\in\Re_{\geq0}^{m}$ is defined as a cost vector, a budget constraint can be expressed as $\ba_{}^{\top}\bgamma\leq C$, where $C$ is the user-specified constant.

By solving the optimization problem in Eq.~\eqref{eq:opt_prob}, we obtain a domain-attribute vector that is potentially new and will achieve a high gain.
In the subsequent section, we explain the conditions in which the optimization problem can be solved efficiently.

\subsection{Linear-in-attribute model (LAM)}
\label{sec:impl}
In this section, we reveal conditions in which an optimization problem is regarded as a convex optimization problem.

We first define a model of a prediction function:
\begin{Definition}[Linear-in-attribute model]
A linear-in-attribute model (LAM) is defined as
\begin{align}
g(\bx,\ba)=\ba_{}^{\top}\bphi(\bx),
\label{eq:LAM}
\end{align}
where $\bphi(\bx):=(\phi_{1}^{}(\bx),\ldots,\phi_{m}^{}(\bx))_{}^{\top}\in\Re_{}^{m}$ is a basis function vector whose parameters are to be learned with training data and $_{}^{\top}$ denotes the transpose of a vector or matrix.
\end{Definition}
We denote the learned basis function vector by $\bphih$; an estimated response function is expressed as $\gh(\bx,\ba)=\ba_{}^{\top}\bphih(\bx)$.

Let $h\colon\Re_{}^{d}\to\Re$ be a prediction function. 
We then define a functional computing gain given $h$.
\begin{Definition}[Aggregate functional]
An aggregate functional $F$ computes gain from $h$ and $\cB$.
That is, the computed gain is given by $F(h;\cB)$.
For the sake of brevity, we omit the notation $\cB$ and use $F(h)$.
\end{Definition}
Let us denote $g(\,\cdot\,,\ba)$ by $g_{\ba}^{}\colon\Re_{}^{d}\to\Re$.
A gain function is expressed as $f(\ba)=F(g_{\ba}^{})$.
We then have the following proposition that characterizes a gain function:
\begin{proposition}
\label{thm:ccv}
If $F$ is a concave function and $g$ is a LAM, the gain function $f$ is concave.
\end{proposition}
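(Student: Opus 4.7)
The plan is to prove the proposition by the standard fact that the composition of a concave function (or functional) with an affine map is concave, where the affine map here is the parameterization $\ba \mapsto g_{\ba}$ induced by the LAM.

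First I would unpack the LAM hypothesis to establish the key linearity: for any $\ba_1, \ba_2 \in \Re^m$ and any $\lambda \in [0,1]$, the function $g_{\lambda \ba_1 + (1-\lambda)\ba_2}$ equals $\lambda g_{\ba_1} + (1-\lambda) g_{\ba_2}$ as an element of the space of real-valued functions on $\Re^d$. This is immediate from Eq.~\eqref{eq:LAM}, since $(\lambda \ba_1 + (1-\lambda)\ba_2)^\top \bphi(\bx) = \lambda\, \ba_1^\top \bphi(\bx) + (1-\lambda)\,\ba_2^\top \bphi(\bx)$ holds pointwise for every $\bx \in \Re^d$. In other words, the map $\ba \mapsto g_{\ba}$ from $\Re^m$ into the space of prediction functions is linear.

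Next I would simply apply concavity of the aggregate functional $F$ to this identity: since $f(\ba) = F(g_{\ba})$, we get
\begin{align*}
f(\lambda \ba_1 + (1-\lambda)\ba_2)
&= F\bigl(g_{\lambda \ba_1 + (1-\lambda)\ba_2}\bigr)
= F\bigl(\lambda g_{\ba_1} + (1-\lambda) g_{\ba_2}\bigr) \\
&\geq \lambda F(g_{\ba_1}) + (1-\lambda) F(g_{\ba_2})
= \lambda f(\ba_1) + (1-\lambda) f(\ba_2),
\end{align*}
which is the definition of concavity of $f$ on $\Re^m$.

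There is really no hard step here; the content of the proposition is entirely in the observation that LAM makes the parameterization affine in $\ba$. The only thing that requires minor care is interpreting ``$F$ is concave'' at the level of functionals, i.e., that $F(\lambda h_1 + (1-\lambda) h_2) \geq \lambda F(h_1) + (1-\lambda) F(h_2)$ for any two prediction functions $h_1, h_2$ in its domain; once this is fixed the argument is a two-line composition. I would therefore keep the proof short and flag that the same reasoning would fail without the LAM assumption, since a nonlinear dependence of $g$ on $\ba$ would break the chain of inequalities.
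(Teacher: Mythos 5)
Your proof is correct, but it takes a genuinely different route from the paper's. The paper argues via the second-derivative test: it reduces to $d=1$, writes $\partial_{\ba}^{2}f = \partial_{g_{\ba}^{}}^{2}F\cdot(\partial_{\ba}^{}g)^{2} + \partial_{g_{\ba}^{}}^{}F\cdot\partial_{\ba}^{2}g$, kills the second term using linearity of $g$ in $\ba$, and invokes $\partial_{g_{\ba}^{}}^{2}F\leq 0$ from concavity of $F$. You instead use the zeroth-order definition: the LAM makes $\ba\mapsto g_{\ba}^{}$ a linear map into function space, and a concave functional composed with an affine map is concave. Your argument buys two real things. First, it does not assume $F$ (or $f$) is twice differentiable, which matters because the paper's own flagship example, the CVaR-based aggregate functional, is defined through a pointwise $\max(0,\cdot)$ and an inner maximization and is not smooth; the paper's chain-rule computation is only heuristic there, whereas your inequality chain goes through verbatim. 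Second, it treats $F$ honestly as a functional on prediction functions rather than informally differentiating with respect to $g_{\ba}^{}$ as if it were a scalar, so no ``without loss of generality $d=1$'' reduction is needed. The paper's calculus argument is shorter to state for readers who think in terms of Hessians, but yours is the more robust proof of the stated proposition.
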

\begin{proof}
Without loss of generality, we assume $d=1$.
The second derivative of $f$ with respect to $\ba$ is given by
$\partial_{\ba}^{2}f = \partial_{g_{\ba}^{}}^{2}F\cdot(\partial_{\ba}^{} g)_{}{^2}
    + \partial_{g_{\ba}^{}}^{}F\cdot\partial_{\ba}^{2} g $,
where $\partial_{\ba}^{}=\frac{\partial}{\partial \ba}$ and $\partial_{g_{\ba}^{}}^{}=\frac{\partial}{\partial g_{\ba}^{}}$.
The second term becomes zero because $g$ is linear in attributes, i.e., $\partial_{\ba}^{2}g=0$.
Since $F$ is concave, $\partial_{g_{\ba}^{}}^{2}F\leq0$.
The first term is thus non-positive.
Then, $\partial_{\ba}^{2}f\leq0$, concluding that $f$ is concave.
\end{proof}

Proposition~\ref{thm:ccv} leads to the following corollary:
\begin{corollary}
If a LAM, $g$, and convex constraints, $b_{j}^{}$ and $c_{j}^{}$, are used and $F$ is concave, the optimization problem in Eq.~\eqref{eq:opt_prob} is a convex optimization problem.
\end{corollary}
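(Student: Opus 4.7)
The plan is to reduce the corollary directly to Proposition~\ref{thm:ccv} together with the standard characterization of a convex optimization problem as the minimization of a convex objective over a convex feasible set.

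The first step I would take is to apply Proposition~\ref{thm:ccv} to the learned prediction function $\gh$ (which is a LAM by hypothesis) and the concave aggregate functional $F$. This immediately yields that the estimated gain $\fh(\ba)=F(\gh_{\ba})$ is concave in $\ba$, so that maximizing $\fh$ is equivalent to minimizing the convex function $-\fh$.

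The second step is to verify that the feasible region $\{\ba\in\Re^m : b_j(\ba)\le 0,\ c_j(\ba)=0\}$ is convex. Each convex inequality $b_j(\ba)\le 0$ cuts out a convex sublevel set, and an arbitrary intersection of convex sets is convex. For the equality constraints, the phrase ``convex $c_j$'' should be read in the standard sense that the zero level set $\{c_j=0\}$ is convex, which forces $c_j$ to be affine and makes $\{c_j=0\}$ a hyperplane. Intersecting all of these sets therefore yields a convex feasible region.

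Finally, combining a concave objective with a convex feasible set is precisely the definition of a convex optimization problem, which completes the argument. I do not expect any real obstacle: once Proposition~\ref{thm:ccv} is invoked, the remainder is bookkeeping with the definition of convexity and basic closure properties of convex sets. The only point that requires care is terminological rather than mathematical, namely the interpretation of ``convex equality constraint,'' which must be taken as affine in order for the zero level set to itself be convex.
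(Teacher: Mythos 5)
Your proposal is correct and matches the paper's (implicit) argument: the paper states the corollary as an immediate consequence of Proposition~\ref{thm:ccv} combined with the standard definition of a convex optimization problem, which is exactly your reduction. Your extra care in reading ``convex equality constraint'' as affine is a sensible clarification of a point the paper glosses over, but it does not change the route.
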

For convex optimization problems, we can use efficient off-the-shelf solvers to obtain a solution.
In Section~\ref{sec:gain_func} and~\ref{sec:constraints}, we introduce useful candidates of $F$, $b_{j}^{}$, and $c_{j}^{}$.
Note that for higher accuracy, one can use non-convex models, rather than LAM, and use Bayesian optimization~\cite{TGO:Mockus+etal:1978}, but we do not pursue that direction because non-convex optimization is often time-consuming than convex optimization.

\section{Examples of objective functions and constraints}
\label{sec:examples}

\subsection{Aggregate functionals}
\label{sec:gain_func}
We introduce concave aggregate functionals.
Recall that $h\colon\Re_{}^{d}\to\Re$ is a prediction function.

\paragraph{Mean response:}
A standard choice of $F$ is a mean response defined as 
\begin{align*}
F(h) = \frac{1}{|\cB|}\sum_{\bx\in\cB} h(\bx) .
\end{align*}
With $g$, the mean response with respect to $\ba$ is expressed as $f(\ba)=(1/|\cB|)\sum_{\bx\in\cB}^{}g(\bx, \ba)$.
For simplicity, we refer to the mean response aggregate functional as the mean gain function.
A mean response can be interpreted easily; in our example of product sales prediction, maximization of a mean response over all users with respect to $\ba$ corresponds to finding a product that is likely to be preferred by all users on average.

On the implementation side, a mean response is linear, resulting in $f$ being concave by Proposition~\ref{thm:ccv}.
It thus enables us to obtain a solution efficiently under convex constraints.

\paragraph{Conditional value at risk (CVaR):}
A mean response is simple and a standard choice but it does not take into account a distribution of responses.
In practical applications, we are sometimes interested in a tail of a distribution, in particular, a group of customers whose responses are relatively lower than others.
If we maximize an objective that can capture a left tail of a response distribution, it corresponds to avoiding the situation in which users will put a lower rating on an object.

To treat a left tail of a response distribution, we can use \emph{conditional value at risk} (CVaR) \cite{JOR:Rockafellar+Uryasev:2000,JBF:Rockafellar+Uryasev:2002} at a significance level $0<\beta<1$, defined as 
\begin{align*}
\CVaR_{\beta}^{}(h):=\maximize_{\tau\in\Re}
\Big(\tau - \frac{1}{\beta |\cB|}\sum_{\bx\in\cB}^{}
    \max(0,\,\tau - h(\bx))\Big) .
\end{align*}
For brevity, we refer to the CVaR-based aggregate functional as the CVaR-based gain function.

A useful property of CVaR is concavity; the CVaR-based gain function with the LAM becomes concave from Proposition~\ref{thm:ccv}.

\subsection{Constraints}
\label{sec:constraints}
In this section, we introduce constraints that can be used in practical applications.

\paragraph{Continuous domain attributes:}
If a domain-attribute variable is a continuous value and a mean response is used as an objective function, we can maximize the objective function as much as we can by increasing the magnitude of the domain-attribute variable; it is, however, meaningless.
To avoid such a useless solution, we normalize a domain-attribute vector and add a constraint such that an obtained solution is also normalized.

More specifically, we first preprocess all the continuous domain-attribute vector in training data such that $||\ba||_{2}^{2}=1$,\footnote{
If the domain-attribute vector consists of continuous and (after-mentioned) categorical variables, we split the vector into a continuous and a categorical domain-attribute part and apply normalization to the continuous part.
} and then add $\|\ba\|_{2}^{2}=1$ as constraints to an optimization problem.
As long as the objective function and other constraints are convex, the optimization problem with the constraint $\|\ba\|_{2}^{2}=1$ is still a convex optimization problem.
To be precise, the optimization problem is \emph{second-order cone programming} \cite{BOOK:Boyd+Vandenberghe:2004}.

\paragraph{Categorical domain attribute:}
In a number of applications, we may want to use a categorical type of variable as a domain attribute.
We can encode such a categorical variable to a $0$-$1$ vector, called the encoded domain attribute, as a \emph{binary} domain attribute.
However, optimization over binary variables results in a $0$-$1$ integer programming, which is NP-hard.
We thus use a relaxation technique to avoid solving NP-hard optimization problems.

If $m$ categories are encoded as an $m$-dimensional domain-attribute vector, choosing $k$ from $m$ categories can be expressed as convex constraints: $0\leq a_{j}^{}\leq 1$ and $\sum_{j=1}^m a_{j}^{}=k$.
After solving the optimization problem, we round the obtained solution to binary variables.
By the relaxation, we can handle categorical domain attributes under a convex optimization framework.

\paragraph{Budget limitation:}
In practice, we may need to pay attention to the cost of a domain-attribute variable.
Suppose that a domain-attribute vector is element-wise non-negative, i.e., $a_{j}^{}\geq0,\;j=1,\ldots,m$.
Let $C$ be a total budget and $\bgamma\in\Re_{}^{m}$ be a cost vector whose each element is a cost of using a corresponding domain-attribute variable.
To reflect budget limitation, we add the convex constraint $\ba^\top\bgamma\leq C$ as a budget limitation to an optimization problem.

\section{Interaction of domain attributes}
\label{sec:sdp}
Interaction of domain attributes, i.e., the dependency of variables, is important, in practice. 
In this section, we explain the means to handle interactions of domain attributes.

\paragraph{LAM with domain-attribute interaction:}
One approach is to make an interaction term, e.g., $a_{j}^{}a_{j'}^{}$ for $j\neq j'$, and use the extended domain-attribute vector by concatenating these interaction terms with the original domain-attribute vector.
Then, we can use the LAM, meaning that Proposition~\ref{thm:ccv} holds.
For example, if we are interested in the second-order interaction only, the extended domain-attribute vector is simply expressed as $\barba\otimes \barba$, where $\barba:=(\ba_{}^{\top}, 1)_{}^{\top}$ and $\otimes$ denotes the Kronecker product\footnote{
For vectors $\ba\in\Re_{}^{m_{1}^{}}$ and $\bb\in\Re_{}^{m_{2}^{}}$, $\ba\otimes\bb=(a_{1}^{}\bb_{}^{\top},a_{2}^{}\bb_{}^{\top}, \ldots, a_{m_{1}^{}}^{}\bb_{}^{\top})_{}^{\top}\in\Re_{m_{1}^{} m_{2}^{}}$.}.
The prediction model taking the second-order interaction into account is still linear in the extended domain-attribute vector $\ba\otimes\ba$; by redefining $\ba=\barba\otimes\barba$, $g(\bx,\ba)=\ba_{}^{\top}\barbphi(\bx)$, where $\barbphi(\bx):=(\phi_{1}^{}(\bx),\ldots, \phi_{(m+1)_{}^{2}}^{}(\bx))_{}^{\top}\in\Re_{}^{(m+1)_{}^{2}}$.
Although the LAM can be extended to handle domain-attribute interactions, we use the LAM without domain-attribute interactions in our experiments.
This is because we next develop a model for handling the interaction of domain attributes efficiently.

\paragraph{Semidefinite programming approach:}
In Section~\ref{sec:constraints}, we explained the relaxation approach for a binary domain attribute, i.e., the constraint $a\in\{0,1\}$ is relaxed to $0\leq a \leq1$.
While the relaxation approach is useful, if our interest is the second-order interaction of binary domain attributes, we can use the theoretically-supported algorithm \cite{KDD:Ito+Fujimaki:2017} inspired by the Goemans and Willamson's MAX-CUT approximation algorithm \cite{JACM:Goemans+Willamson:1995}.

Since the original purpose of the algorithm developed by \cite{KDD:Ito+Fujimaki:2017} is for price optimization, we modify it for the domain attribute optimization problem.
To this end, let us first define the model taking into account the second-order binary domain-attribute interaction as
\begin{Definition}[Quadratic-in-binary-attribute model]
A quadratic-in-binary-attribute model (QBM) is defined as 
\begin{align*}
g(\bx,\ba)=\sum_{j=1}^{m}\sum_{j'=1}^{m} a_{j}^{}a_{j'}^{}
    \Big(
    \sum_{q=1}^{r}\phi_{q}^{(j)}(\bx)\phi_{q}^{(j')}(\bx)
    \Big),
\end{align*}
where $\phi_t\colon\Re_{}^{d}\to\Re$ is a basis function to be learned and $r$ is a positive integer that controls flexibility of the QBM, similarly to factorization machines \cite{ICDM:Rendle:2010}.
Note that in the binary domain attribute case, a linear term, i.e., $\ba_{}^{\top}\bphi(\bx)$, is included in the quadratic term because of $a_{j}^{2}=a_{j}^{}$ for any $j\in\{1,\ldots,m\}$.
\end{Definition}
We can check that the QBM can be expressed as $g(\bx,\ba)=\ba_{}^{\top}\bPhi(\bx)\ba$, where $[\bPhi(\bx)]_{j,j'}^{}=\bphi_{j}^{}(\bx)_{}^{\top}\bphi_{j'}^{}(\bx)$, $\bphi_{j}^{}(\bx)=\big(\phi_{j}^{(1)}(\bx),\ldots,\phi_{j}^{(r)}(\bx)\big)_{}^{\top}\in\Re_{}^{r}$.
The QBM with linear constraints is binary quadratic programming (BQP), which is difficult to optimize in general.
However, for certain special cases, we can solve BQP efficiently.

We next introduce a constraint to domain-attribute vectors:
\begin{Definition}[Choice constraint]
For a set of indices $\cI$, a choice constraint is $\sum_{j\in\cI}^{}a_j=1$.
\end{Definition}
Hereafter, if we use the choice constraints, we assume that the index sets for the choice constraints satisfies the following condition:
The index sets $\{\cI_{p}^{}\}_{p=1}^{m}$ satisfies $\sum_{p=1}^{m}|\cI_{p}^{}|=m$, where $|\cdot|$ denotes the size of a set.

For the mean response and the QBM with the choice constraints, the BQP optimization problem can be relaxed to the following \emph{semidefinite programming} (SDP):
\begin{align}
\begin{aligned}
&\maximize_{\bY\in\mathbb{S}_{+}^{m+1}}^{}\,
    && \tr\big(\bC_{}^{\top}\bY\big) \\
&\subjectto && Y_{j,j}^{}=1,\;j=1,\ldots,m+1 \\
& && \sum_{j\in\cI_{p}^{}}^{}Y_{j,m+1}^{}=2-|\cI_{p}^{}|,\; p=1,\ldots,m \\
& && \sum_{j\in\cI_{p}^{}}^{}\sum_{j'\in\cI_{p'}}^{} Y_{j,j'}^{}
 = \big( 2-|\cI_{p}^{}| \big) \big( 2-|\cI_{p'}^{}| \big), \\
& && \quad p,p'=1,\ldots,m ,
\end{aligned}
\label{eq:sdp}
\end{align}
where $\mathbb{S}_{+}^{m+1}$ is the set of real symmetric semidefinite matrices of size $m+1$ \cite{BOOK:Boyd+Vandenberghe:2004}, 
\begin{align*}
\bC:=\frac{1}{4}
\begin{pmatrix}
\barbC & \barbC\bone_{m}^{} \\
\bone_{m}^{\top}\barbC & \bone_{m}^{\top}\barbC\bone_{m}^{} 
\end{pmatrix} \in \mathbb{S}_{+}^{m+1},
\end{align*}
$\barbC:=(1/|\cB|)\sum_{\bx\in\cB}^{}\sum_{t=1}^{r} \phi_{r}^{(t)}(\bx)\phi_{r'}^{(t)}(\bx)$, and $\bone_{m}^{}$ is the $m$-dimensional all-ones vector.
We round the obtained solution to the binary domain-attribute vector on the basis of the randomized search algorithm proposed in \cite{KDD:Ito+Fujimaki:2017}.

An advantage of the SDP formulation is that if we further add convex constraints to the optimization problem in Eq.~\eqref{eq:sdp}, the optimization problem is still SDP.
Thus, thanks to the SDP formulation, we can obtain the solution efficiently rather than solving the BQP.


\section{Theoretical analyses}
\label{sec:theory}
In this section, we present two theoretical properties of our proposed algorithm.
Specifically, in Section~\ref{sec:gen_err_bound}, we show generalization error bounds of the prediction method used in Section~\ref{sec:prediction}.
In Section~\ref{sec:approx_factor}, we present approximation factors of the optimization method used in Section~\ref{sec:optimization}.
All the proofs are given in Appendix~\ref{sec:proofs}.

\subsection{Generalization error bound}
\label{sec:gen_err_bound}
In this analysis, we consider the case where the feature mapping function can be expressed as $\bphi(\bx) = \bW \bpsi(\bx)$, where $\bW \in \bbR^{m \times b}$ is the parameter matrix, $\bphi\colon\Re^d\to\Re^b$ is the vector of basis functions, i.e., $\bpsi(\bx)=(\psi_1(\bx), \ldots, \psi_b(\bx))^\top$, and $\{ \psi_\ell\colon\Re^d\to\Re \}_{\ell=1}^b$ is fixed in advance.
Then, LAM can be expressed as the bilinear function model $g(\bx, \ba)=\ba^\top\bW \psi(\bx)$.

The key idea is to reformulate the prediction model $g$ into 
\begin{align*}
h(\bxtilde)=\bwtilde^\top\bxtilde,    
\end{align*}
where $\bwtilde:=\vec(\bW^\top) \in \bbR^{dm}$, $\bxtilde_{i'}:=\vec( \bpsi(\bx_i^{(t)}) (\ba^{(t)})^\top ) \in \bbR^{dm}$, and $i'=i+\sum_{t'=1}^{t-1} n^{(t')}$.
Accordingly, we express a set of training samples drawn from a distribution $Q$ as
$\{ (\bxtilde_i, y_i) \}_{i=1}^{n}$, where $n=\sum_{t=1}^{T} n^{(t)}$.
We assume that there exists the target labeling function $f_Q\colon\bbR^{dm}\to\bbR$, $y_i=f_Q(\bxtilde_i)$.
We next respectively define the expected and empirical risks as $J_Q(h):=\E_{Q}\big[ \ell\big( h(\bxtilde), f_Q(\bxtilde) \big)\big]$ and $\Jh_Q(h):=\frac{1}{n}\sum_{i=1}^n \ell\big( h(\bxtilde_i), f_Q(\bxtilde_i) \big)$,
where $E_{Q}$ is the expectation over $Q$.

We also assume the $\ell_q$ loss function $\ell_q(y,y')=| y - y' |^q$ for a real number $q\geq 1$.
Besides, we assume that there exists a $M>0$ such that $| h(\bxtilde) - f_Q(\bxtilde) | \leq M$ for all $\bxtilde$ and $h\in\cH$, and there exists $B_{\bwtilde}>0$ such that $\| \bwtilde \| \leq B_{\bwtilde}$. 
Similarly, we assume that $\| \bx \| \leq B_\bx$ and $\| \ba \| \leq B_\ba$, leading to $\| \bxtilde \| \leq B_\bx B_\ba$.
We denote a function class of $g$ by $\cH:=\{ h(\bxtilde)=\bwtilde^\top\bxtilde \mid \|\bwtilde\|_2 \leq B_{\bwtilde}, \|\bxtilde\|_2 \leq B_\bx B_\ba \}$.

Fix $\ba$, then, we have the following proposition:
\begin{proposition}
\label{thm:std_gen_err}
Fix $\ba$, then, for any $\delta > 0$, the following inequality holds with probability at least $1-\delta$ for any $h \in \cH$:
\begin{align*}
J_Q(h) \leq \Jh_Q(h) + \frac{ 2qM^{q-1}B }{ \sqrt{n} } + M^q\sqrt{ \frac{ \ln(1/\delta) }{ 2n } },
\end{align*}
where $B:=B_{\bwtilde} B_\bx B_\ba$.
\end{proposition}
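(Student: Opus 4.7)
}
The plan is to follow the standard Rademacher complexity route for linear predictors, specialized to the reformulated hypothesis class $\cH = \{ h(\bxtilde) = \bwtilde^\top \bxtilde : \|\bwtilde\|_2 \leq B_{\bwtilde} \}$. Because $\ba$ is fixed and the samples $\{(\bxtilde_i, y_i)\}_{i=1}^n$ can be treated as i.i.d.\ draws from $Q$ (after the $\vec$-reformulation), all standard tools apply without modification.

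First I would apply the classical uniform-deviation bound obtained by combining McDiarmid's inequality with symmetrization. Since $|h(\bxtilde) - f_Q(\bxtilde)| \leq M$, the loss $\ell_q(h(\bxtilde), f_Q(\bxtilde)) = |h(\bxtilde) - f_Q(\bxtilde)|^q$ is bounded by $M^q$, so each summand in $\hat J_Q(h)$ has bounded difference $M^q/n$. McDiarmid then gives, with probability at least $1-\delta$,
\begin{align*}
J_Q(h) \leq \hat J_Q(h) + 2\,\mathfrak{R}_n(\ell_q \circ \cH) + M^q \sqrt{\tfrac{\ln(1/\delta)}{2n}},
\end{align*}
where $\mathfrak{R}_n(\cdot)$ denotes the expected Rademacher complexity.

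Next I would peel off the loss via Talagrand's contraction lemma. On the interval $[-M, M]$, the function $u \mapsto |u|^q$ is $qM^{q-1}$-Lipschitz, hence $\mathfrak{R}_n(\ell_q \circ \cH) \leq qM^{q-1}\, \mathfrak{R}_n(\cH)$. The final ingredient is the well-known Rademacher bound for bounded-norm linear predictors: for $\cH$ as above and $\|\bxtilde\|_2 \leq B_\bx B_\ba$, Cauchy--Schwarz combined with Jensen's inequality yields
\begin{align*}
\mathfrak{R}_n(\cH) \leq \frac{B_{\bwtilde}\, B_\bx B_\ba}{\sqrt{n}} = \frac{B}{\sqrt{n}}.
\end{align*}
Chaining these three bounds produces exactly the claimed inequality.

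The only mildly delicate point is verifying that the $\vec$-reformulation legitimately turns the multi-domain problem into the i.i.d.\ linear-model setting assumed by the bounds above (in particular, that fixing $\ba$ and using $\bxtilde_{i'} = \vec(\bpsi(\bx_i^{(t)})(\ba^{(t)})^\top)$ preserves the norm bound $\|\bxtilde\| \leq B_\bx B_\ba$ via $\|\vec(\bpsi\ba^\top)\|_2 = \|\bpsi\|_2 \|\ba\|_2$); the remaining steps are textbook and require no new ideas.
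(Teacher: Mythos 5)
Your proposal is correct and follows essentially the same route as the paper's proof: the standard Rademacher uniform-deviation bound (McDiarmid plus symmetrization), Talagrand's contraction lemma with the $qM^{q-1}$ Lipschitz constant of $u\mapsto|u|^q$ on $[-M,M]$, and the Cauchy--Schwarz bound $\fR_n(\cH)\leq B_{\bwtilde}B_\bx B_\ba/\sqrt{n}$ for the norm-bounded linear class. The only cosmetic difference is that the paper states the deviation bound for the generic pair loss $\ell(h,h')$ (reusing it later for the discrepancy term) and then specializes $h'=f_Q$, whereas you work with $f_Q$ directly; both handle the shift by $f_Q$ identically, via the invariance of Rademacher complexity under translation by a fixed function.
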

This result shows that for the same domain, i.e., the domain characterized by the training attributes, the generalization error bound converges with the order $\cO(1/\sqrt{n})$, which is the optimal without any additional assumption~\citep{TIT:Mendelson:2008}.

Next, we consider generalization error bounds on the domain characterized by the optimized attribute vectors.
Compared with the above analysis, it requires to measure the difference between the source (training attribute vectors) and target (optimized attribute vectors) domains.
Let $P$ and $Q$ be distributions of the target and source domains, respectively.
We regard that $\bxtilde_{i'}=\vec( \bpsi(\bx_i^{(t)}) (\ba^{(t)})^\top )$ and $\bxtilde_{i}'=\vec( \bpsi(\bx_i^{(t)}) (\ba^{(t')})^\top )$ are independently drawn from distributions $Q$ and $P$, respectively, where $\ba_{}^{(t')}$ is an test/optimized domain-attribute vector.
We then analyze the generalization error bounds of ZSDA on the basis of a tool for domain adaptation.
More specifically, we have training samples from the source domain with distribution $Q$ and evaluate the performance on the target domain with distribution $P$.

We first define 
$L_Q(h,h'):=\E_Q\big[ \ell\big( h(\bxtilde), h'(\bxtilde)\big)\big]$ and 
$L_P(h,h'):=\E_P\big[ \ell\big( h(\bxtilde), h'(\bxtilde)\big)\big]$,
and denote the corresponding empirical approximators by
$\Lh_Q(h,h')$ and $\Lh_P(h,h')$, respectively.
By definition, we have $J_Q(h)=L_Q(h,f_Q)$ and $\Jh_Q(h)=\Lh_Q(h,f_Q)$.
Similarly, $J_P(h):=\E_{P}\big[ \ell\big( h(\bxtilde), f_P(\bxtilde) \big) \big]$ and $\Jh_P(h):=\frac{1}{n'}\sum_{i=1}^n \ell\big( h(\bxtilde_i), f_P(\bxtilde_i) \big)$, where $n'$ is the number of samples in the domain $P$ and $f_P$ is the target labeling function in $P$.
To measure the difference between two distributions $P$ and $Q$, we use the \emph{discrepancy distance} \citep{COLT:Mansour+etal:2009} defined as
\begin{align*}
\disc(P, Q)=\sup_{ h,h'\in\cH }\; \big|
    L_P(h, h') - L_Q(h, h') \big| .
\end{align*}

Let $h_Q^\ast$ and $h_P^\ast$ be the minimizers of $J_Q(h)$ and $J_P(h)$, respectively.
We assume that  $| h(\bxtilde) - h'(\bxtilde) | \leq M$ for all $\bxtilde$ and $h,h'\in\cH$.
Under the above assumptions, we have the following proposition:
\begin{proposition}
\label{thm:gen_err}
For any $\delta > 0$, the following inequality holds with probability at least $1-\delta$ for any $h \in \cH$:
\begin{align*}
L_P(h,f_P) &\leq \Lh_Q(h, h_Q^\ast) + \disc(\Ph, \Qh) 
  + L_P(h_Q^\ast, h_P^\ast) \\
&\quad + L_P(h_P^\ast, f_P) + \frac{ 10qM^{q-1} B }{ \sqrt{n} }
+ 3M^q\sqrt{ \frac{ \ln(3/\delta) }{ 2n } }.
\end{align*}
\end{proposition}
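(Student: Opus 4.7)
The plan is to follow the discrepancy-based domain adaptation template of \citet{COLT:Mansour+etal:2009}: decompose $L_P(h, f_P)$ via a triangle-type inequality so that the only term involving $h$ can be pushed onto the source distribution via the discrepancy, and then replace all population quantities with their empirical counterparts via Rademacher-complexity bounds tailored to the linear class $\cH$.

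First I would apply the triangle inequality for the $\ell_q$ loss twice to obtain
\begin{align*}
L_P(h, f_P) \leq L_P(h, h_Q^\ast) + L_P(h_Q^\ast, h_P^\ast) + L_P(h_P^\ast, f_P) .
\end{align*}
The last two summands already appear in the target bound, so the remaining work is to control $L_P(h, h_Q^\ast)$. By the definition of the discrepancy distance, $L_P(h, h_Q^\ast) \leq L_Q(h, h_Q^\ast) + \disc(P, Q)$, which transfers the risk onto the source distribution $Q$ at the cost of $\disc(P, Q)$. Finally, I would pass from population to empirical quantities via two uniform bounds: $L_Q(h, h_Q^\ast) \leq \Lh_Q(h, h_Q^\ast) + \varepsilon_1$ and $\disc(P, Q) \leq \disc(\Ph, \Qh) + \varepsilon_2 + \varepsilon_3$, where each $\varepsilon_i$ combines a Rademacher-complexity term with a McDiarmid concentration term.

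For the Rademacher complexity I would exploit the bilinear form of the model: since $\cH$ is the linear class $\{\bxtilde \mapsto \bwtilde^\top \bxtilde : \|\bwtilde\| \leq B_{\bwtilde}\}$ on inputs with $\|\bxtilde\| \leq B_\bx B_\ba$, its empirical Rademacher complexity is at most $B/\sqrt{n}$ with $B = B_{\bwtilde} B_\bx B_\ba$. Talagrand's contraction lemma with the $qM^{q-1}$-Lipschitz $\ell_q$ loss (restricted to range $[-M, M]$) then bounds the Rademacher complexity of the loss class by $qM^{q-1} B/\sqrt{n}$. Combined with symmetrization and McDiarmid's inequality (using $|\ell| \leq M^q$), this yields $\varepsilon_1 \leq 2qM^{q-1} B/\sqrt{n} + M^q\sqrt{\ln(3/\delta)/(2n)}$ with probability $1 - \delta/3$, exactly as in Proposition~\ref{thm:std_gen_err}. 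Because $\disc$ is a supremum over \emph{pairs} $(h, h') \in \cH \times \cH$, one contraction per argument gives a factor of two in the Rademacher term, so each of $|L_Q(\cdot,\cdot) - \Lh_Q(\cdot,\cdot)|$ and $|L_P(\cdot,\cdot) - \Lh_P(\cdot,\cdot)|$ is bounded uniformly by $4qM^{q-1} B/\sqrt{n} + M^q\sqrt{\ln(3/\delta)/(2n)}$, each with probability $1 - \delta/3$.

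A union bound over the three uniform-convergence events yields Rademacher terms summing to $(2 + 4 + 4)\,qM^{q-1} B/\sqrt{n} = 10\,qM^{q-1} B/\sqrt{n}$ and three McDiarmid terms summing to $3M^q\sqrt{\ln(3/\delta)/(2n)}$, matching the claimed bound. The main obstacle I anticipate is the first step: for $q > 1$ the $\ell_q$ loss does not satisfy the exact triangle inequality, so following \citet{COLT:Mansour+etal:2009} I would either restrict to $q \in \{1, 2\}$ (where the implicit constants can be absorbed into $M$) or work through the metric $\ell_q^{1/q}$ supplied by Minkowski's inequality and lift back to $\ell_q$ using the uniform range bound $M$. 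The remaining bookkeeping---tracking Lipschitz constants through the contraction step and splitting the total failure probability $\delta$ evenly across the three events---is routine.
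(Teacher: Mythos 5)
Your proposal follows essentially the same route as the paper's proof: the same two-step triangle-inequality decomposition into $L_P(h,h_Q^\ast)+L_P(h_Q^\ast,h_P^\ast)+L_P(h_P^\ast,f_P)$, the same transfer $L_P(h,h_Q^\ast)\leq L_Q(h,h_Q^\ast)+\disc(P,Q)$, the same empirical replacement of $L_Q$ and of $\disc(P,Q)$ via $\disc(P,\Ph)+\disc(Q,\Qh)+\disc(\Ph,\Qh)$, and the same Rademacher/Talagrand accounting giving $(2+4+4)\,qM^{q-1}B/\sqrt{n}$ and three concentration terms with $\ln(3/\delta)$. If anything you are more careful than the paper on one point it silently glosses over, namely that $\ell_q$ for $q>1$ does not satisfy an exact triangle inequality, so flagging the Mansour et al.\ workaround there is a welcome addition rather than a deviation.
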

This result shows that the target expected error $L_P$ in the left-hand side is upper-bounded by the source empirical error $\Lh_Q$ plus additional constants and confidence terms.

Note that the same transformation was used in~\citep{ICML:Romera-Paredes+Torr:2015} to analyze generalization error bounds, but they considered classification while our focus is regression. 
Moreover, the subsequent analyses differ from \citep{ICML:Romera-Paredes+Torr:2015}, e.g., a measure for domains.

\subsection{Approximation factor}
\label{sec:approx_factor}
We here analyze the attribute optimization step in our framework
from a complexity-theoretic point of view.
In a nutshell,
we show that
the non-negative linear counterpart of attribute optimization is a generalization of packing integer programs \cite{JCSS:Raghavan:1988}, and
it is \textbf{NP}-hard in general but
approximable if the vectors representing constraints are ``sparse.''
For the sake of simplicity of analysis, we make the following assumptions:
\begin{itemize}
    \item The objective function is given by $f(\ba) = F(g_{\ba})$, where
    $F$ is a mean response and $g_{\ba}$ follows a LAM model; i.e.,
    $f(\ba) = \frac{1}{|\cB|}\sum_{\bx \in \cB} \ba^{\top} \bphi(\bx)$ for
    a non-negative vector function $\bphi(\cdot)$.
    \item Constraint functions $b_j$ and $c_j$ are \emph{non-negative and linear}; i.e.,
    for each $j \in [s]$, there exists a vector $\bb_j \in \Re^{n}_{\geq 0}$ and a scalar $d_j \geq 0$ such that $b_j(\ba) = \bb_j^\top \ba - d_j$, and
    for each $j \in [t]$, there exists a vector $\bc_j \in \Re^{n}_{\geq 0}$ and a scalar $e_j \geq 0$ such that $c_j(\ba) = \bc_j^\top \ba - e_j$.
\end{itemize}
The attribute optimization problem under the above assumptions (hereafter called \emph{non-negative attribute optimization}; NAO)
can be written as follows:
\begin{align}
\label{eq:nao}
    \max_{\ba}\!\left\{\! \frac{1}{|\cB|} \! \sum_{\bx \in \cB} \! \ba^{\!\top} \! \bphi(\bx) \mid
    \bb_j^\top \ba \leq d_j, \forall j \in [s]; \bc_j^\top \ba \leq e_j, \forall j \in [t]
    \! \right\}\!.
\end{align}
Hereafter, Eq.~\eqref{eq:nao} is referred to as \texttt{NAO\_mix}
if $\ba$ contains both binary and real-valued variables, and
\texttt{NAO\_01} if $\ba$ contains only binary variables.
\texttt{NAO\_01} is a special case of \texttt{NAO\_mix}.

We will discuss the relation between NAO and
a discrete optimization problem called \emph{packing integer programs} (PIPs) \cite{JCSS:Raghavan:1988}.
Our first result is that \texttt{NAO\_01} includes PIPs as a special case,
implying a hardness-of-approximation result of \texttt{NAO\_01}.

\begin{theorem}
\label{thm:pip-hard}
There exists a polynomial-time reduction from PIPs to \texttt{NAO\_01}.
It is thus \textbf{NP}-hard to approximate \texttt{NAO\_01} within a factor of $n^{1-\epsilon}$ for any $\epsilon > 0$, where $n$ is the dimension of a domain-attribute vector.
\end{theorem}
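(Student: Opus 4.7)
The plan is to exhibit a polynomial-time, approximation-preserving reduction from PIPs to \texttt{NAO\_01}, and then appeal to the known inapproximability of PIPs. Recall that a PIP instance consists of a non-negative matrix $A\in\Re_{\geq 0}^{s\times n}$, a non-negative right-hand side $b\in\Re_{\geq 0}^{s}$, and a non-negative weight vector $c\in\Re_{\geq 0}^{n}$, and asks for $\max\{c^\top x \mid Ax\leq b,\ x\in\{0,1\}^n\}$.

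Given such a PIP instance, I would build a \texttt{NAO\_01} instance as follows. Take $\cB=\{\bx_0\}$ to be a singleton (so $|\cB|=1$) and define the basis function by $\bphi(\bx_0):=c$, which is non-negative as required. For each $j\in[s]$, set $\bb_j$ equal to the $j$-th row of $A$ and $d_j:=b_j$; impose no equality constraints ($t=0$); and let $\ba\in\{0,1\}^n$. Under this mapping the \texttt{NAO\_01} objective collapses to $\frac{1}{|\cB|}\sum_{\bx\in\cB}\ba^\top\bphi(\bx)=c^\top\ba$, and the inequalities $\bb_j^\top\ba\leq d_j$ reproduce $Ax\leq b$ row by row. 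Thus feasible solutions and their objective values are in bijection, and any $\alpha$-approximation algorithm for \texttt{NAO\_01} transfers directly to an $\alpha$-approximation for PIP of the same dimension.

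To close the argument, I would invoke the known hardness of PIPs: maximum independent set embeds as a PIP via the edge constraints $x_u+x_v\leq 1$ and objective $\sum_i x_i$ (all coefficients non-negative and binary), and independent set is \textbf{NP}-hard to approximate within a factor of $n^{1-\epsilon}$ for every $\epsilon>0$. Composing this with the reduction above yields the claimed $n^{1-\epsilon}$ inapproximability of \texttt{NAO\_01}.

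The reduction itself is essentially mechanical; the main subtlety I would want to check is that the parameter $n$ in the PIP/independent-set hardness bound (the number of PIP variables, i.e., vertices) coincides with the parameter $n$ in the theorem (the dimension of the domain-attribute vector $\ba$), so that the inapproximability factor survives the reduction without degradation. Because the construction above introduces no auxiliary variables, this matching is automatic. A secondary point worth confirming is that the \texttt{NAO} framework indeed admits an arbitrary non-negative vector function $\bphi$, which is granted by the assumptions made at the start of Section~\ref{sec:approx_factor}.
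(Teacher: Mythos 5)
Your reduction is essentially identical to the paper's: both construct a \texttt{NAO\_01} instance with $\frac{1}{|\cB|}\sum_{\bx\in\cB}\bphi(\bx)$ equal to the PIP weight vector (your singleton $\cB$ is just the most explicit way to do this), map each packing row to an inequality constraint with $t=0$, and invoke the known $n^{1-\epsilon}$ inapproximability of PIPs. Your version is slightly more self-contained in that it spells out the independent-set source of the hardness, which the paper only cites.
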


Having known that \texttt{NAO\_01} is hard in general,
we restrict the class of input structures to study the approximability of \texttt{NAO\_mix}.
For each $i \in [n]$,
let $S(i)$ be the number of constraints that $i$ appears in;
i.e.,
\begin{align*}
S(i) := |\{ j \in [s] : c_{j,i} > 0 \}| + |\{j \in [t] : d_{j,i} > 0\}|.
\end{align*}
The \emph{column sparsity} is then defined as $S := \max_{i \in [n]} S(i)$.
Our second result states that
we can approximate \texttt{NAO\_mix} accurately if $S$ is bounded.

\begin{theorem}
\label{thm:pip-approx}
There exists a polynomial-time $8S$-factor approximation algorithm for \texttt{NAO\_mix},
where $S$ is the column sparsity of an input instance.
\end{theorem}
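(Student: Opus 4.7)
The plan is to adapt the LP-rounding framework developed for column-sparse packing integer programs. Since Theorem~\ref{thm:pip-hard} already exhibits \texttt{NAO\_01} as a generalization of PIPs and Raghavan--Thompson style randomized rounding is known to yield good guarantees when column sparsity is bounded, the bulk of the work is in handling the \emph{mixed} binary/continuous attribute vector and in pinning down the constant.

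First, I would form the LP relaxation of \texttt{NAO\_mix} by replacing each membership constraint $a_i \in \{0,1\}$ with the box $a_i \in [0,1]$. Because the objective and all constraint coefficients are non-negative, the relaxation is a packing LP solvable in polynomial time. Let $\ba^\star$ be an optimum fractional solution and $L := f(\ba^\star)$; by relaxation, $L \geq \mathrm{OPT}$. Fix a scaling factor $\alpha$ (to be tuned so the final approximation factor is $8$) and set $\tilde\ba := \ba^\star/(\alpha S)$. Each real-valued coordinate is then output at its scaled LP value, while each binary coordinate is rounded independently to $1$ with probability $\tilde a_i$ and to $0$ otherwise. Finally, an \emph{alteration} step zeroes out the binary items participating in any violated constraint.

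The analysis bounds (i) the probability that a given constraint is violated and (ii) the expected loss caused by alteration. For any packing constraint $\bb_j^{\top}\ba \le d_j$, the deterministic contribution of real-valued variables is at most $d_j/(\alpha S)$, and the expected contribution of binary variables is also at most $d_j/(\alpha S)$. A Markov-type bound then controls the per-constraint violation probability by a small constant. Because each binary item $i$ appears in at most $S$ constraints, a union bound shows that the probability that $i$ is removed by alteration is at most $S$ times the per-constraint violation probability. Tuning $\alpha$ so that this overall removal probability is at most $\tfrac{1}{2}$ yields an expected surviving objective of at least $L/(2\alpha S)$; choosing $\alpha$ so that $2\alpha = 8$ gives the claimed $L/(8S) \ge \mathrm{OPT}/(8S)$ guarantee. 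The method of conditional expectations then derandomizes the whole procedure into a deterministic polynomial-time algorithm.

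The step I expect to be the main obstacle is the alteration analysis: each binary variable participates in up to $S$ constraints, so a naive union bound over all constraints would inflate the expected loss far beyond what is needed. The two appearances of $S$ --- once in the scaling $1/(\alpha S)$ that suppresses per-constraint violation probabilities, and once in the bound on the number of constraints a single item participates in --- must cancel just once in the final bookkeeping, leaving a single factor of $S$ and a concrete constant. Getting that cancellation clean, while also accounting for the deterministic slack used up by the continuous coordinates, is what fixes $\alpha$ and pins the constant at $8$.
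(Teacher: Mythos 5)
Your high-level strategy (packing LP relaxation plus column-sparsity-aware rounding with alteration) is in the same family as the paper's, but the paper does not re-derive the rounding analysis. It fixes the continuous coordinates at their LP values, relaxes the equality constraints to inequalities so that the residual binary problem is literally a PIP of column sparsity at most $S$, and invokes the algorithm of \citet{TheoComp:Bansal+etal:2012} as a black box; the only quantitative fact it needs is that that algorithm's guarantee is stated relative to the \emph{LP optimum} of the PIP, so that $\bw_{bi}^{\top}\bar{\ba}_{bi} \geq \frac{1}{8S}\bw_{bi}^{\top}\tilde{\ba}_{bi}$ and the factor $\frac{1}{8S}$ can be pulled out of the sum of the continuous and binary contributions.

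The genuine gap in your version is exactly the step you flag as the main obstacle: the alteration analysis that pins the constant at $8$ is asserted, not carried out, and the bookkeeping you sketch does not close as written. The expected surviving objective is $\sum_i w_i \tilde{a}_i\,(1-\Pr[i \text{ removed} \mid a_i = 1])$, so you need to bound the \emph{conditional} removal probability given $a_i=1$. Conditioning on $a_i=1$ adds the deterministic load $b_{j,i}$ to constraint $j$, which can be as large as $d_j$ itself, so the Markov bound on the unconditional expected load (at most $2d_j/(\alpha S)$) says nothing about the conditional violation probability; this is precisely the subtlety that forces the known analyses to separate ``big'' and ``small'' items and use correlation inequalities, and it is why the constant $8$ is a cited theorem rather than a two-line Markov argument. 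Two smaller omissions: you drop the equality constraints $\bc_j^{\top}\ba = e_j$ entirely (the paper relaxes them to inequalities and then raises entries of the rounded binary vector to restore them, which cannot decrease the non-negative objective), and the derandomization by conditional expectations is an extra nontrivial claim that the theorem does not require.
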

The above theorem means that
we can find a domain-attribute vector that has an objective value at least $\frac{1}{8S}$ times the optimum of Eq.~\eqref{eq:nao} in polynomial time.
Practically, we can assume the column sparsity $S$ to be small; e.g.,
in experimental evaluation, $S$ is at most $4$.
Our approximation algorithm makes use of an approximation algorithm for PIPs by~\citet{TheoComp:Bansal+etal:2012},
which conceptually works as follows:
(1) it solves the linear programming relaxation of PIPs and
applies a randomized rounding on the obtained solution $\tilde{\ba}$ to compute a binary vector $\ba$, and
(2) it repeatedly picks an index $i$ with the largest entry $\tilde{a}_i$ that violates some constraints and assigns $a_i \leftarrow 0$.
Note that in practice, the above procedure is equivalent to the simple rounding method described in Section~\ref{sec:constraints}.


\section{Experiments}
\label{sec:experiments}
In this section, we report experimental evaluations of
our proposed method on both toy datasets and real-world datasets.

\subsection{Common settings}
We first describe the common settings between both toy and real-world datasets.

\paragraph{Evaluation:}
To evaluate the performance, we used a domain-wise dataset split, i.e., domain-attribute vectors for testing do not appear in training.
Let $\cA_\tr^{}:=\{ \ba_{}^{(\kappa(t))} \}_{t=1}^{T_\tr^{}}$ and $\cA_\te^{}:=\{ \ba_{}^{(\kappa(T_\tr^{}+t'))} \}_{t'=1}^{T_\te^{}}$, where $\kappa$ maps the original index to the permuted index, and $T_\tr^{}$ and $T_\te^{}$ respectively denote the number of training and test domains.
The dataset $\cD=\{ (\cD_{}^{(t)}, \ba_{}^{(t)}) \}_{t=1}^T$ was split into $\cD_\tr:=\{ (\cD_{}^{(\kappa(t))}, \ba_{}^{(\kappa(t))}) \}_{t=1}^{T_\tr^{}}$ and $\cD_\te:=\{ (\cD_{}^{(\kappa(T_\tr^{}+t'))}, \ba_{}^{(\kappa(T_\tr^{}+t'))}) \}_{t'=1}^{T_\te^{}}$ such that $\cA_\tr^{} \cap \cA_\te^{} = \emptyset$.
In our experiment, we split a dataset into $80\%$ training and $20\%$ test datasets.

For each test set of features $\cB_\te^{(t')}:=\{ \bx_i^{(t')} \}_{i=1}^{n_{}^{(t')}}$ for test domain $t'$, we optimized the domain attributes, denoted by $\bah_{}^{(t')}$.
Let $g_{}^\ast$ be an evaluation function appropriately defined for each experiment.
In the artificial datasets, $g_{}^\ast$ was the ground-truth function used for data generation. 
In the real-world datasets, we regarded the LAM trained with ``all'' the samples, including the samples from the training and test domains, as $g^\ast$ since we cannot access $g^\ast$.

With the optimized domain-attribute vector $\bah_{}^{(t')}$, we computed the average and relative standard deviation\footnote{
The relative standard deviation is the standard deviation over $\{ g_{}^\ast(\bx_i^{(t')}, \bah_{}^{(t')}) \}_{i=1}^{n_{}^{(t')}}$ divided by the mean $\overline{g}_{}^{(t')}$.
} over $\{ g_{}^\ast(\bx_i^{(t')}, \bah_{}^{(t')}) \}_{i=1}^{n_{}^{(t')}}$, denoted by $\overline{g}_{}^{(t')}$ and $\widetilde{g}_{}^{(t')}$, respectively.
As evaluation metrics, we used $\overline{g}_\te=(1/T_\te^{})\sum_{t'=1}^{T_\te^{}} \overline{g}_{}^{(\kappa(T_\tr^{} + t'))}$ and $\widetilde{g}_\te=(1/T_\te^{})\sum_{t'=1}^{T_\te^{}} \widetilde{g}_{}^{(\kappa(T_\tr^{} + t'))}$.

\paragraph{Prediction models:}
For comparison, we used the FUA with linear ridge regression.
Specifically, we trained $s=\bw_{}^\top\ba + b$ with $\{ (\ba_{}^{(t)}, \overline{y}^{(t)} \}_{t=1}^{T_\tr^{}}$, where $b\in\bbR$ is the intercept.
We then maximized $\bah_{}^\top\bwh$ in a feasible domain, where $\bwh$ is the estimated parameter.
We refer to this method as \emph{FUA-Mean}.
The above attribute optimization can be regarded as the use of the mean gain function in our proposed method, but it does not take the dependency of features into account in both the prediction and optimization steps.
Due to the FUA's nature, we cannot use the CVaR-based gain function in the same way as our proposed method.

We used a four-layer neural network ($d$-$100$-$100$-$m$) for the feature mapping function $\bphi$ of the LAM.
For QBM, we used a three-layer neural network ($d$-$100$-$m$) for $\bphi_q$ and set $r=3$.
For the hidden layers of neural networks, we used the ReLU activation function \citep{AISTATS:Glorot+etal:2011} and \emph{batch normalization} \citep{ICML:Ioffe+Szegedy:2015}.
We further split the training data into $80\%$ training and $20\%$ validation data.
We then trained the neural network with the \emph{Adadelta} optimizer \citep{arXiv:Zeiler:2012} until $200$ epochs, and we used the model that achieved the lowest validation error for inference.
We refer to LAM/QBM with the mean gain function as \emph{LAM-Mean}/\emph{QBM-Mean} and refer to LAM with the CVaR gain function and $\beta=0.05$ as \emph{LAM-CVaR}.

\subsection{Mean vs. CVaR-based gain function}
\label{sec:exp_lin}
We here show the effect of the mean and CVaR-based gain functions.

\paragraph{Data:}
We considered an attribute vector to consist of continuous ($a_{1}^{},a_{2}^{},a_{3}^{}$) and binary ($a_{4}^{},a_{5}^{},a_{6}^{}$) variables, i.e., $m=6$.
Each element of the continuous variables was drawn from the standard Gaussian distribution, denoted by $\cN(0,1_{}^{2})$, and the continuous variables were then normalized such that $\sum_{i=1,2,3}^{}a_{i}^{2}=1$.
For the categorical variables, one of the elements was chosen uniformly at random.

For a response function, we used $g_{}^\ast(\bx,\ba)=\bx_{}^{\top}\bW\ba$, where each element of $\bW\in\Re_{}^{d\times m}$ was drawn from $\cN(0,1_{}^2)$.
The response $y$ was then observed by $y=g_{}^\ast(\bx,\ba)+\varepsilon$, where $\varepsilon$ was drawn from $\cN(0, 0.1_{}^{2})$.
We drew $15$ attribute vectors and $50$ samples for each object, i.e., we had $\{(\cD_{}^{(t)},\ba_{}^{(t)})\}_{t=1}^{15}$, where $\cD_{}^{(t)}=\{(\bx_{i}^{(t)},y_{i}^{(t)})\}_{i=1}^{50}$.
We set $d$ as $10$ and drew each element of $\bx$ from $U(0,1)$, where $U(a,b)$ denotes a uniform distribution with the range $[0, 1]$.

\begin{table}[t]
\centering
\caption{
Average of $\overline{g}^{}_\te$ over $10$ trials. The number in parentheses is average of $\widetilde{g}_\te^{}$.
In terms of the average response, the mean gain function was better than the CVaR-based gain function.
The CVaR-based gain function was stable in terms of the relative standard deviation of obtained responses (see also Figure~\ref{fig:dist_mean_cvar}).
The boldface denotes the best method in terms of the average gain.
}\label{tab:synth}
\begin{tabular}{rrr}
\toprule
\multicolumn{1}{c}{FUA-Mean}
  & \multicolumn{1}{c}{LAM-Mean}
  & \multicolumn{1}{c}{LAM-CVaR} \\
\midrule
3.24 (0.54) & \textbf{3.28 (0.52)} & 2.86 (0.46) \\
\bottomrule
\end{tabular}
\end{table}

\begin{figure}[t]
    \centering
    \includegraphics[clip,width=.45\columnwidth]{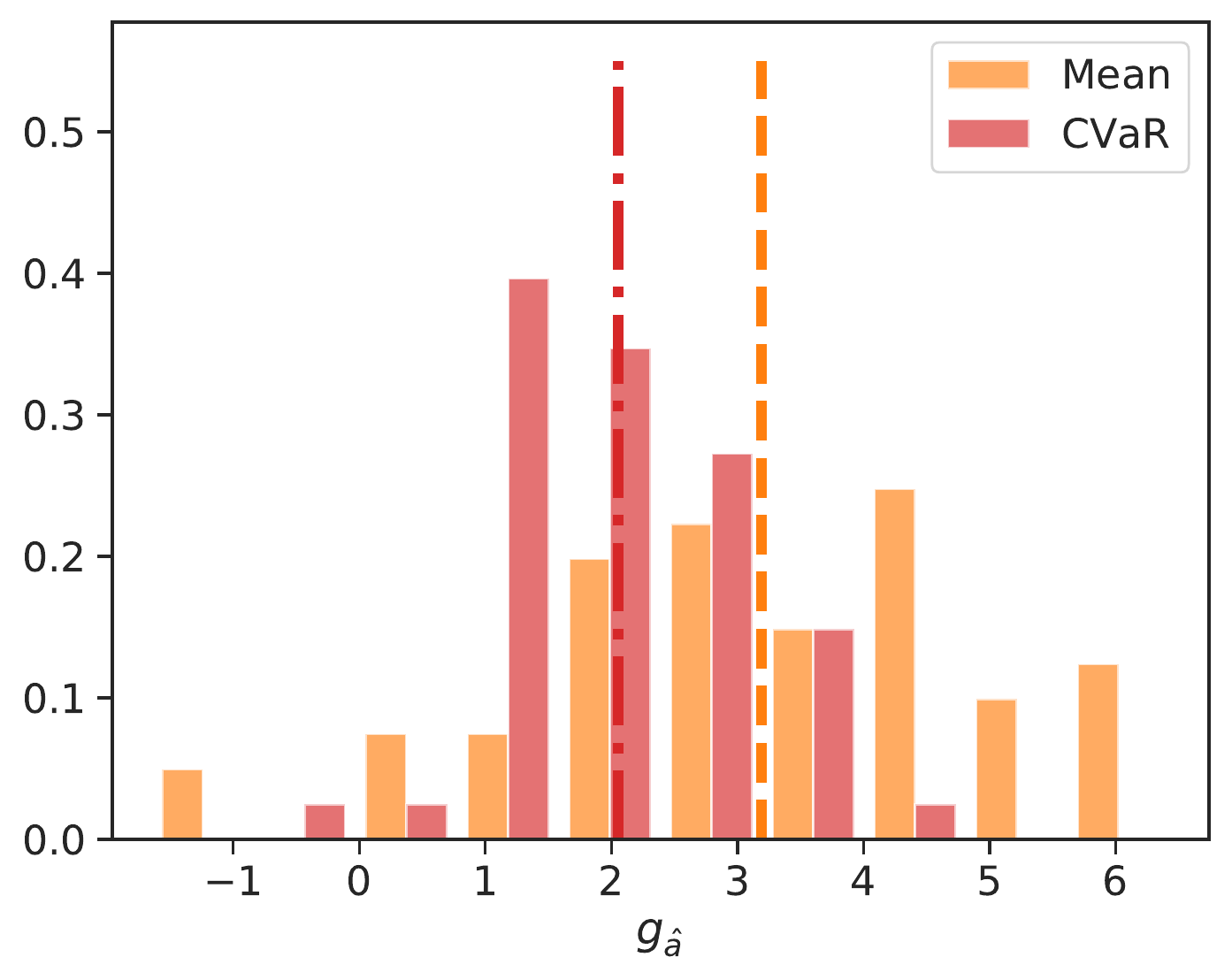}
    \caption{
    Normalized histograms of $g(\bx, \bah^{ (t') } )$ obtained by the mean and CVaR-based gain functions, respectively.
    We picked a test domain and compared the optimized domain-attribute vectors of the mean and CVaR-based gain functions.
    The vertical lines indicate the average values of the two responses.
    Since the CVaR-based gain function takes the response distribution into account, in particular, left tail of the distribution, the minimum value obtained by this function was higher than the mean gain function.
    Also, the response distribution obtained by the CVaR-based gain function was narrower than that obtained by the mean gain function.
    }\label{fig:dist_mean_cvar}
\end{figure}

\paragraph{Results:}
Table~\ref{tab:synth} summarizes the averages of $\overline{g}^{}_\te$ and $\widetilde{g}_\te^{}$ over $10$ trials, showing that the mean gain function was better than the CVaR-based gain function in terms of the average response, while the latter gain function was stable in terms of standard deviation.

To visualize why the standard deviation of the CVaR-based gain function was lower than that of the mean gain function, we plotted two histograms of the obtained responses of the attributes obtained by the mean and CVaR-based gain functions, respectively, in Figure~\ref{fig:dist_mean_cvar}, which illustrates that the response distribution obtained by the CVaR-based gain function was narrower than that obtained by the mean gain function. 
This is because the CVaR-based gain function took into account the response distribution, in particular, the left-tail of the distribution, resulting in the relative standard deviation of the obtained responses being smaller while the average response remained high, almost comparable to that obtained by the mean gain function in this synthetic data experiment.


\subsection{Effect of interaction of domain attributes}
\label{sec:exp_fm}

\paragraph{Data:}
We considered a $10$-dimensional attribute vector to consist of binary variables with $\sum_{j=1}^3 a_j = 1$, $\sum_{j=4}^6 a_j = 1$, and $\sum_{j=7}^{10} a_j = 1$.
The categorical attribute was chosen from each group uniformly at random.
For a response function, we used $g_{}^\ast(\bx,\ba)=(1/30)\ba_{}^{\top} \big( \sum_{q=1}^3 \phi_q^{}(\bx) \phi_q^{}(\bx)_{}^\top \big) \ba$, where $\phi_q(\bx):=\bW_q\bx$, and each element of $\bW_q\in\Re_{}^{m\times d}$ was drawn from $\cN(0,1_{}^2)$.
The response $y$ was then observed by $y=g_{}^\ast(\bx,\ba)+\varepsilon$, where $\varepsilon$ was drawn from $\cN(0, 0.1_{}^{2})$.
We drew $15$ attribute vectors and $50$ samples for each object, i.e., we had $\{(\cD_{}^{(t)},\ba_{}^{(t)})\}_{t=1}^{10}$, where $\cD_{}^{(t)}=\{(\bx_{i}^{(t)},y_{i}^{(t)})\}_{i=1}^{50}$.
We set $d$ as $10$ and drew each element of $\bx$ from $U(0,1)$, where $U(a,b)$ denotes a uniform distribution with the range $[0, 1]$.

\paragraph{Results:}
Table~\ref{tab:synth_fm} summarizes the averages of $\overline{g}^{}_\te$ and $\widetilde{g}_\te^{}$ over $10$ trials, showing that the obtained gain of QBM was larger than that of FUA and LAM.
This result shows that when there is a dependency between domain-attribute variables, models incorporating such a domain-attribute interaction attain higher performance for attribute optimization.

\begin{table}[t]
\centering
\caption{
Averages of $\overline{g}^{}_\te$ over $10$ trials. 
The number in parentheses is the average of relative standard deviation $\widetilde{g}_\te^{}$.
Unlike LAM, QBM attained much higher gain.
The boldface denotes the best method in terms of the average gain.
}\label{tab:synth_fm}
\begin{tabular}{rrr}
\toprule
\multicolumn{1}{c}{FUA-Mean}
  & \multicolumn{1}{c}{LAM-Mean}
  & \multicolumn{1}{c}{QBM-Mean} \\
\midrule
2.06 (0.94) & 2.08 (0.93) & \textbf{2.20 (0.97)} \\
\bottomrule
\end{tabular}
\end{table}

\subsection{Real-world datasets}
We next evaluate the performance on benchmark datasets.
The statistics of the datasets are summarized in Table~\ref{tab:bench_stats}, and the details are given below.

\paragraph{Sushi:}
The SUSHI\footnote{
Sushi is a Japanese dish containing vinegared rice.
} Preference (Sushi) Dataset~\citep{KDD:Kamishima:2003} consists of consumer ratings for sushi, features of consumers, and domain attributes of each kind of sushi.\footnote{
\url{http://www.kamishima.net/sushi/}
}
The rating of sushi is done by five-grade evaluation (from $0$ to $4$), the mean rating is $2.73$, and the median rating is $3.00$.
For the description (domain attributes) of sushi, we used the style of sushi, the type of sushi, the oiliness, and the normalized price.
In this dataset, the type and style of sushi are categorical domain-attributes, and the oiliness and normalized price are continuous domain-attributes.
The task was to find better combinations of domain attributes of sushi.

For the consumer features, we used gender, range of ages, the prefecture in which until $15$ years the consumer had lived longest, the prefecture in which the consumer currently lives, and the total time taken for stating their preference of sushi.
We then converted the characteristics introduced above into numerical vectors and finally obtained $12$-dimensional feature vectors and $16$-dimensional domain-attribute vectors.

\paragraph{Coffee:}
The coffee quality dataset contains $1338$ reviews.\footnote{
The dataset was downloaded from \url{https://github.com/jldbc/coffee-quality-database}.
Note that we deleted samples that have missing entries.
}
Reviews are given for beans and farms. 
We used the information for a farm as features and that for a bean as domain attributes.
Specifically, the ``Country of Origin,'' ``Certification Body,'' and ``Altitude''\footnote{
If the altitude is given as a range, e.g., $1200$-$1300$, we used the mean value.
} in the dataset were used as features, and the ``Species,'' ``Processing Method,'' and ``Variety'' were used as domain attributes.
The ``Total Cup Points'' were used as the score (reward).
The full score is $100$, the minimum and maximum scores in the dataset are $59.8$ and $90.6$, respectively, the mean score is $82.1$, and the median score is $82.5$.
The number of possible choices of domain attributes is $2$ species, $5$ processing methods, and $29$ varieties.   
The goal was to find a combination of a processing method, species, and variety of coffee for a specific farm.

\paragraph{Book:}
We used goodbooks-10k (Book).\footnote{
\url{https://github.com/zygmuntz/goodbooks-10k}.
}
The Book dataset collects ratings of books from readers.
The range of ratings is from $1$ to $10$, the mean rating is $4.68$, and the median rating is $3.0$.
Since there were items without ratings, we used mean imputation to focus on the effect of our method for simplicity.

We used ``Age'' and ``Country`` as the features of readers, and we used tags of books annotated by users in the book-rating platform as domain attributes.
We manually extracted book tags that were likely to be relevant to ratings.
Examples of extracted tags are ``biography,'' ``comedy,'' and ``fiction.''
After preprocessing, that is, one-hot encoding, we had $7{,}121$ ratings of $147$ books ($m=77$) from readers ($d=74$).

\paragraph{Results:}
Table~\ref{tab:bench_results} summarizes the averages of $\overline{g}^{}_\te$ and $\widetilde{g}_\te^{}$ over $10$ trials, showing that i) LAM with the mean gain function achieved a higher response than the other methods, and ii) LAM with the CVaR gain function tended to produce results with smaller variances among them.

In the real-world datasets, the obtained performance difference between the FUA and the proposed method was larger, compared with the artificial datasets.
Since the difference between the different sets of features in the real-world datasets was larger than that in the artificial datasets, the proposed method, taking features into account to optimize attributes, returned more suitable domain attributes than the FUA.
These results imply that attribute optimization is a promising means of cooperating with humans in designing new products.
On the basis of the results presented by our method, humans can continue further trial and error to find a better description of new products.
Another aspect of using our method is that it reduces the cost of designing products and services for each customer because our method is aware of customers' features.
The proposed method will allow us to provide products and services tailored to each customer, which will improve customer satisfaction.

\begin{table}[t]
\centering
\caption{
Statistics of real-world datasets.
}\label{tab:bench_stats}
\begin{tabular}{lrrrr}
\toprule
Dataset & $n$ & $d$ & $m$ & $T$ \\
\midrule
Sushi & 50{,}000 & 31 & 15 & 100 \\
Coffee & 1{,}161 & 63 & 36 &  23 \\
Book & 7{,}282 & 64 & 77 & 169 \\
\bottomrule
\end{tabular}
\end{table}

\begin{table}[t]
\centering
\caption{
Averages of $\overline{g}^{}_\te$ over $10$ trials. 
The Number in parentheses is the average of $\widetilde{g}_\te^{}$ over $10$ trials. 
The boldface denotes the best method in terms of the average gain.
}\label{tab:bench_results}
\begin{tabular}{lrrr}
\toprule
Dataset & \multicolumn{1}{c}{FUA-Mean}
  & \multicolumn{1}{c}{LAM-Mean}
  & \multicolumn{1}{c}{LAM-CVaR} \\
\midrule
Sushi & 3.79 (0.11) & \textbf{3.85 (0.11)} & 3.76 (0.10) \\
Coffee & 74.7 (0.09) & \textbf{99.4 (0.05)} & 98.2 (0.04) \\
Book & 4.76 (0.18) & \textbf{7.00 (0.16)} & 6.17 (0.10) \\
\bottomrule
\end{tabular}
\end{table}

\section{Conclusion}
Zero-shot domain adaptation is useful in real-world applications, e.g., predicting the sales of a new product for which labeled data are not available.
While existing studies focus on improving the prediction accuracy, we considered a reverse process for prediction that can be categorized as predictive optimization.
To this end, we proposed a simple framework for predictive optimization with zero-shot domain adaptation and analyzed the conditions in which optimization problems become convex.
Furthermore, we discussed the way of handling interactions of variables representing a domain description.
Through numerical experiments, we demonstrated the potential effectiveness of the proposed framework.

Finding a promising combination of characteristics of existing products is an important task for manufacturers.
While the amount of available data on existing products and consumer reactions is increasing day by day, handling a large amount of data is often difficult for humans without support from computer systems. 
Our simple formulation could be a guideline for investigating bottlenecks in data-driven design systems and unlock further possibilities in this direction of research.

\bibliographystyle{plainnat}
\bibliography{biblong,attrgen}

\clearpage
\appendix

\section{Proofs}
\label{sec:proofs}

\subsection{Proofs of generalization error bound}
Recall the notations and assumptions:
\begin{itemize}
    \item $f_P$ and $f_Q$ are the labeling functions on the target and source domains, respectively.
    \item $\ell_q(y, y') = | y - y' |^q$ for a real number $q\geq 1$.
    \item $| h(\bxtilde) - f(\bxtilde) | \leq M$ for all $\bxtilde$ and $h \in \cH$.
    \item $| h(\bxtilde) - h'(\bxtilde) | \leq M$ for all $\bxtilde$ and $h,h' \in \cH$.
    \item $\| \bwtilde \| \leq B_{\bwtilde}$, $\| \bxtilde \| \leq B_{\bxtilde}$, $\| \bx \| \leq B_{\bx}$, and $\| \ba \| \leq B_{\ba}$.
    \item $S:=\{ (\bxtilde_i, y_i) \}_{i=1}^n \sim P$, $S':=\{ (\bxtilde'_i, y_i) \}_{i=1}^{n'} \sim Q$, and $n=n'$.
\end{itemize}

\begin{proof}
From the assumptions, $L_Q(h, h') = \E_Q[ \ell( h(\bxtilde), h'(\bxtilde) ) ] \leq M^q$.
Let $\cL\colon \bxtilde \mapsto \ell( h(\bxtilde), h'(\bxtilde) )$.
Based on the standard Rademacher analysis (see, e.g., \citet{book:Mohri+etal:2012}, we have for any $\delta>0$, the following inequality with probability at least $1-\delta$ for any $h, h \in \cH$:
\begin{align}
L_Q(h, h') - \Lh_Q(h, h') \leq 2\fR_n( \cL) + M^q\sqrt{ \frac{ \ln (1/\delta) }{ 2n } } ,
\label{eq:h_hp_bound}
\end{align}
where $\fRh_S(\cH):=\frac{1}{n}\E_\bsigma[ \sup_{h\in\cH}\; \sum_{i=1}^n \sigma_i h(\bxtilde_i) ]$ and $\fR_n(\cH):=\E_S[ \fRh_S(\cH)] $ are the empirical and expected \emph{Rademacher complexity}, $\sigma_i$ is an independent uniform random variables taking values in $\{+1,-1\}$, and $\bsigma:=(\sigma_1,\ldots,\sigma_n)^\top$.
Let $\cH_f=\{ \bxtilde \mapsto h(\bxtilde) - f(\bxtilde) \mid h \in \cH \}$.
Then, $\cL$ can be rewritten as $\cL=\{ \ell_q \circ h_f \mid h_f \in \cH_f \}$.
Since $\ell_q$ is $qM^{q-1}$-Lipschitz over $[ -M, M ]$, we can use Talagrand's lemma~\citep{book:Ledoux+Talagrand:1991}: $\fRh_S(\cL)\leq qM^{q-1}\fRh_S(\cH_f)$.
Furthermore, $\fRh_S(\cH_f) = \fRh_S(\cH)$.
For the linear model, the Rademacher complexity can be bounded (see, e.g., \citet{book:Mohri+etal:2012}) as $\fR_n(\cH) \leq \frac{ B_{\bwtilde} B_\bx B_\ba }{ \sqrt{ n } }$.
Replacing $h'$ with $f_Q$ in Eq.~\eqref{eq:h_hp_bound}, we have Proposition~\ref{thm:std_gen_err}. 
That is, for any $\delta>0$, the following inequality holds with probability at least $1-\delta$ for any $h\in \cH$: $L_P(h, f_P) - \Lh_P(h, f_P) \leq \frac{ 2qM^{q-1}B }{ \sqrt{n} } + M^q\sqrt{ \frac{ \ln (1/\delta) }{ 2n } }$.

Let $\cH_h=\{ \bxtilde \mapsto h(\bxtilde) - h'(\bxtilde) \mid h,h' \in \cH \}$.
$\cL$ can be rewritten as $\cL=\{ \ell_q \circ h_h \mid h_h \in \cH_h \}$.
We then have $\fRh_S(\cL) \leq qM^{q-1}\fRh_S(\cH_h) \leq 2qM^{q-1}\fRh_S(\cH)$.
From Eq.~\eqref{eq:h_hp_bound},
we have $\disc(P,\Ph) \leq 4qM^{q-1}\fR_{n}(\cH) + M^q\sqrt{ \frac{ \ln (1/\delta) }{ 2n } }$.
Based on the triangle inequality, we have $\disc(P, Q) \leq \disc(P, \Ph) + \disc(Q, \Qh) + \disc(\Ph, \Qh)$.
For any $\delta>0$, the following inequality holds with probability at least $1-\delta$:
\begin{align*}
\disc(P, Q) &\leq \disc(\Ph, \Qh) + 8qM^{q-1}\fR_n(\cH) 
 + 2M^q \sqrt{ \frac{ \ln (2/\delta) }{ 2n } } .
\end{align*}
Applying the triangle inequality, we have,
\begin{align*}
&L_P(h, f_P) \leq L_P(h, h_Q^\ast) + L_P(h_Q^\ast, h_P^\ast) + L_P(h_P^\ast, f_P) \\
&\qquad \leq L_Q(h, h_Q^\ast) + \disc(P, Q)  + L_P(h_Q^\ast, h_P^\ast)
  + L_P(h_P^\ast, f_P) .
\end{align*}
We have, for any $\delta > 0$, the following inequality holds with probability at least $1-\delta$ for any $h \in \cH$:
\begin{align*}
L_P(h,f_P) &\leq \Lh_Q(h, h_Q^\ast) + L_P(h_Q^\ast, h_P^\ast) 
+ L_P(h_P^\ast, f_P) \\
&\quad + \disc(\Ph, \Qh) + 10qM^{q-1}\fR_n(\cH) + 3M^q\sqrt{ \frac{ \ln(3/\delta) }{ 2n } }.
\end{align*}
Replacing $\fR_n(\cH)$ with the upper bound, we obtain Proposition~\ref{thm:gen_err}.
\end{proof}

\subsection{Proofs of approximability}

Before going into the proof of the two results above,
we define PIPs as follows \cite{JCSS:Raghavan:1988}.

\begin{Definition}[Packing integer program]
Given  $m$ vectors $\bA_1, \ldots, \bA_m \in \Re^{n}_{\geq 0}$,
a capacity vector $\bB \in \Re^{m}_{\geq 0}$, and
a weight vector $\bw \in \Re^{n}_{\geq 0}$,
the \emph{packing integer program (PIP)} is defined as the following problem:
\begin{align*}
\max_{\ba \in \{0,1\}^n}\left\{ \bw^\top \ba \mid \bA_j^\top \ba \leq B_j, \forall j \in [m] \right\}.
\end{align*}
We define the column sparsity as
$S := \max_{i \in [n]} | \{ j \in [m] : A_{j,i} > 0 \} |$.
\end{Definition}


\begin{proof}[Proof of Theorem~\ref{thm:pip-hard}]
Let $\bA_1, \ldots, \bA_m \in \Re^{n}_{\geq 0}$, $\bB \in \Re^{m}_{\geq 0}$, and $\bw \in \Re^{n}_{\geq 0}$
be an instance of PIP.
We can construct an instance of \texttt{NAO\_01} in polynomial time
such that the following conditions are satisfied:
\begin{itemize}
    \item $\bphi$ and $\cB$ satisfy that
    $\frac{1}{|\cB|} \sum_{\bx \in \cB} \bphi(\bx) = \bw$,
    \item the number of inequality constraints is $s = m$,
    the number of equality constraints is $t = 0$, and
    \item for each $j \in [s]$,
    it holds that $\bb_j = \bA_{j}$ and $d_j = B_j$.
\end{itemize}
It is easy to verify that
the resulting instance of \texttt{NAO\_01}
is exactly equivalent to a given instance of PIP;
the inapproximability result is thus obvious (see, \citet{TheoComp:Bansal+etal:2012,TheoComp:Zuckerman:2007}).
\end{proof}

\begin{proof}[Proof of Theorem~\ref{thm:pip-approx}]
Fix an \texttt{NAO\_mix} instance
$\bphi(\cdot), \cB, \{\bb_j\}_{j \in [s]}, \{d_j\}_{j \in [s]}, \{\bc_j\}_{j \in [t]}, \{e_j\}_{j \in [t]}$.

We first partition an $n$-dimensional domain-attribute vector $\ba$ into
continuous domain attributes and binary domain-attributes.
Let $\cI_{re}$ and $\cI_{bi}$ be the set of indices for continuous variables and binary variables, respectively.
Let us denote
$\ba_{re} = (a_i)_{i \in \cI_{re}}$ and $\ba_{bi} = (a_i)_{i \in \cI_{bi}}$,
and denote
$\bw = \frac{1}{|\cB|}\sum_{\bx \in \cB} \bphi(\bx)$,
$\bw_{re} = (w_i)_{i \in \cI_{re}}$, and $\bw_{bi} = (w_i)_{i \in \cI_{bi}}$;
note that $\bw^\top \ba = \bw_{re}^\top \ba_{re} + \bw_{bi}^\top \ba_{bi}$.
The original \texttt{NAO\_mix} (denoted P1) can be rewritten as
\begin{align*}
    \max_{\substack{\ba_{re} \in \Re_{\geq 0}^{\cI_{re}} \\ \ba_{bi} \in \{0,1\}^{\cI_{bi}}}}
    \left\{ \bw^\top \ba \mid \bb_j^\top \ba \leq d_j, \forall j \in [s]; \; \bc_j^\top \ba = e_j, \forall j \in [t] \right\}.
\end{align*}

We now describe the approximation algorithm for P1.
We first consider the linear programming (LP) relaxation of P1 (denoted LP1)
that relaxes ``$\ba_{bi} \in \{0,1\}^{\cI_{bi}}$'' to
``$\ba_{bi} \in \Re_{\geq 0}^{\cI_{bi}}$''.
Since LP1 is an LP instance,
we can solve it exactly in polynomial time, e.g., by the ellipsoid method, and
denote its optimal solution by $\tilde{\ba} \in \Re_{\geq 0}^{n}$.
We then create a new instance of \texttt{NAO\_01} (denoted P2)
where entries of $\ba_{re}$ are \emph{fixed} to entries of $\tilde{\ba}_{re}$, and relax each equality constraint ``$ \bc_j^\top \ba = e_j $''
to ``$ \bc_j^\top \ba \leq e_j $'', the resulting \texttt{NAO\_01} instance (denoted P2') by which is an instance of PIP whose
column sparsity is at most $S$.
We thus can use \citet{TheoComp:Bansal+etal:2012}'s algorithm
to find an $8S$-factor approximate solution
$ \bar{\ba}_{bi} \in \{0,1\}^{\cI_{bi}} $ for P2'.
We can increase some of the entries of $\bar{\ba}_{bi}$ until
the equality constraints are satisfied,
which does not decrease the objective value.
Finally, we return a domain-attribute vector $\ba$ defined as follows:
\begin{align}
    a_i =
    \begin{cases}
    \tilde{a}_i & \text{if } i \in \cI_{re}, \\
    \bar{a}_i & \text{if } i \in \cI_{bi}.
    \end{cases}    
\end{align}

Since the feasibility of $\ba$ is obvious,
we analyze its approximation ratio.
Let $\ba^*$ be an optimal solution for P1.
Observe that $\bw^{\top} \tilde{\ba} \geq \bw^{\top} \ba^*$
as $\tilde{\ba}$ is an optimal solution for LP1.
We then show that $\bw_{bi}^{\top} \bar{\ba}_{bi} \geq \frac{1}{8S} \bw_{bi}^{\top} \tilde{\ba}_{bi}$.
Recall that \citet{TheoComp:Bansal+etal:2012}'s algorithm returns
a feasible solution for a PIP instance that has an objective value
at least $\frac{1}{8S}$ times the optimum value of \emph{its LP relaxation}.
If $\bar{\ba}'_{bi}$ is an optimal solution for the LP relaxation of P2',
it holds that
$ \bw_{bi}^\top \bar{\ba}'_{bi} \geq \bw_{bi}^\top \tilde{\ba}_{bi} $; hence,
we have that $ \bw_{bi}^\top \bar{\ba}_{bi} \geq \frac{1}{8S} \bw_{bi}^\top \bar{\ba}'_{bi} \geq \frac{1}{8S} \bw_{bi}^\top \tilde{\ba}_{bi} $.
Consequently,
\begin{align*}
    \bw^\top \ba & = \bw_{re}^\top \tilde{\ba}_{re} + \bw_{bi}^\top \bar{\ba}_{bi} 
    \geq \bw_{re}^\top \tilde{\ba}_{re}+  \frac{1}{8S} \bw_{bi}^\top \tilde{\ba}_{bi} \\
    & \geq \frac{1}{8S} (\bw_{re}^\top \tilde{\ba}_{re} + \bw_{bi}^\top \tilde{\ba}_{bi})
    \geq \frac{1}{8S} \bw^\top \ba^*.
\end{align*}
$\ba$ is an $8S$-factor approximate solution to \texttt{NAO\_mix},
which completes the proof.
\end{proof}

\end{document}